\documentclass{article}

\usepackage[preprint]{neurips_2023}
\usepackage{amsmath, amssymb, amsthm}
\usepackage{mathtools}
\usepackage{hyperref}
\usepackage{bm}
\newtheorem{theorem}{Theorem}

\usepackage{tikz}
\usetikzlibrary{positioning,arrows.meta,fit,shapes,calc}

\title{Transformers are Stateless Differentiable Neural Computers}

\author{%
  Bo Tang\thanks{Bo Tang is currently an Associate Professor at the Department of Electrical and Computer Engineering,
  Worcester Polytechnic Institute, Worcester, MA 01609. Email: btang1@wpi.edu} \\
  Department of Electrical and Computer Engineering\\
  Worcester Polytechnic Institute\\
  Worcester, MA 01609 \\
  \texttt{btang1@wpi.edu} \\
  \And
  Weiwei Xie \\
  Department of Data Science \\
  Worcester Polytechnic Institute \\
  Worcester, MA 01609 \\
  \texttt{wxie4@wpi.edu} \\
}

\begin{document}

\maketitle

\begin{abstract}
Differentiable Neural Computers (DNCs) were introduced as recurrent architectures equipped with an addressable external memory supporting differentiable read and write operations. Transformers, in contrast, are nominally feedforward architectures based on multi-head self-attention. In this work we give a formal derivation showing that a causal Transformer layer is \emph{exactly} a stateless Differentiable Neural Computer (sDNC) where (1) the controller has no recurrent internal state, (2) the external memory is a write-once matrix of value vectors, (3) content-based addressing via keys implements attention, and (4) multi-head attention corresponds to multiple parallel read heads. We further extend this equivalence to cross-attention, showing that encoder-decoder Transformers are precisely sDNCs with distinct read-from and write-to memories. Our results provide a unified memory-centric interpretation of Transformers and contribute to the ongoing effort to place modern large language models in a principled computational framework.
\end{abstract}

\section{Introduction}

Differentiable Neural Computers (DNCs) \cite{graves2014ntm} are neural architectures equipped with an external memory matrix and differentiable addressing mechanisms. They generalize Neural Turing Machines (NTMs) by adding content-based lookup, temporal link matrices, and gated write/erase operations. Transformers \cite{vaswani2017attention}, meanwhile, have become the dominant architecture in natural language processing, vision, and large-scale generative modelling \cite{elman_memory_transformer,schank2021attention}. They use multi-head attention to integrate information across tokens while retaining a stateless feedforward controller.

Although these architectures are usually treated as distinct, recent observations in theoretical and empirical studies suggest that Transformers behave like memory-access systems. In this paper we make this correspondence exact: we show that every (causal) Transformer layer implements a \emph{stateless} DNC (sDNC), a restricted version of a DNC in which the controller carries no recurrent state, the write mechanism is append-only, memory erasure or modification does not occur, content-based reads are identical to the attention mechanism.

We provide a full formalization, definitions, examples, and a theorem demonstrating equivalence. We also extend the analysis to \emph{cross-attention}, proving that encoder-decoder Transformers implement sDNCs with multiple external memories.

\section{Background}

Differentiable Neural Computers (DNCs), introduced by \citet{graves2014ntm}, extend the Neural Turing Machine architecture by augmenting a neural controller with a differentiable external memory. The controller is typically a recurrent neural network, such as an LSTM, possessing an internal hidden state $\mathbf{h}_t$ that evolves as a function of the input and past memory interactions. At each timestep the controller emits parameters that govern how the external memory $\mathbf{M}_t \in \mathbb{R}^{N \times W}$ should be accessed. Reading is performed by producing a distribution over memory locations, often determined through content-based addressing, which results in a set of read vectors formed through a weighted sum of memory rows. Writing involves an erase-and-add mechanism, where location-specific erasure vectors selectively remove information and addition vectors subsequently inject new content into the targeted memory slots. Related memory-augmented models include MANNs \cite{santoro2016meta} and Neural Turing Machines \cite{graves2014ntm}. The combination of recurrent controller dynamics and flexible read/write operations allows DNCs to learn algorithmic tasks involving long-range dependencies, structured data manipulation, and persistent memory utilization. 

Transformers, introduced by \cite{vaswani2017attention}, represent a markedly different architectural paradigm. Rather than relying on recurrence, Transformers employ an entirely feedforward computational structure in which all dependencies among tokens are mediated through attention mechanisms. Attention mechanisms used in Transformers are extensions of earlier alignment-based attention models in neural machine translation \cite{bahdanau2015neural, luong2015effective}. The central component is multi-head self-attention, wherein a sequence of input vectors is linearly projected into queries, keys, and values. Attention weights are computed through scaled dot products of queries and keys, normalized via a softmax function, and used to form a context-dependent weighted sum over values:
\[
\mathrm{Attn}(\mathbf{Q},\mathbf{K},\mathbf{V})
= \mathrm{softmax}\!\left(\frac{\mathbf{Q}\mathbf{K}^\top}{\sqrt{d_k}}\right)\mathbf{V}.
\]
By using multiple attention heads in parallel, the model can integrate information across different representation subspaces, enabling rich contextualization of each token with respect to all others. Following attention, a position-wise feed-forward network introduces further nonlinearity and capacity, while residual connections and layer normalization stabilize optimization and facilitate deep stacking of layers.

A crucial distinction between Transformers and traditional recurrent architectures, including the DNC controller, is the absence of any recurrent hidden state in the Transformer. Each layer processes the entire sequence in parallel, and all inter-token dependencies arise from attention over the inputs rather than from stepwise propagation of internal state. This means that the representational “memory” of a Transformer is distributed across the set of key and value vectors computed at each layer, forming an implicit, write-once storage mechanism that is subsequently queried by attention. The interpretation of attention as a memory access mechanism has been explored in several recent works \cite{elman_memory_transformer, katharopoulos2020transformers}. Despite lacking explicit recurrence or programmable write heads, Transformers exhibit behavior strongly reminiscent of memory-augmented systems, motivating the formal connection explored in this work.

\begin{figure}[t]
\centering

\begin{tikzpicture}[
  node distance=1.6cm and 2.0cm,
  every node/.style={font=\small},
  box/.style={draw, rounded corners, align=center, minimum width=3.1cm, minimum height=1.0cm},
  arrow/.style={-{Latex}, thick},
  dashedarrow/.style={-{Latex}, thick, dashed}
]

% -----------------------------------------------------
% Transformer (Left Column)
% -----------------------------------------------------

\node[box] (x) {$\mathbf{X}$ \\ Input sequence};
\node[box, below=of x] (qkv) {Linear projections \\ $\mathbf{Q},\mathbf{K},\mathbf{V}$};
\node[box, below=of qkv] (attn) {Self-attention \\ $\mathrm{softmax}\!\big(\frac{\mathbf{QK}^\top}{\sqrt{d_k}}\big)\mathbf{V}$};
\node[box, below=of attn] (z) {Output \\ $\mathbf{Z}$};

% Transformer label
\node[draw, rounded corners, fit=(x)(qkv)(attn)(z), inner sep=6pt, label={[yshift=0.2cm]above:\textbf{Transformer Layer}}] (transformer_box) {};

% Vertical arrows
\draw[arrow] (x) -- (qkv);
\draw[arrow] (qkv) -- (attn);
\draw[arrow] (attn) -- (z);

% -----------------------------------------------------
% sDNC (Right Column)
% -----------------------------------------------------

\node[box, right=3.5cm of x] (ctrl_in) {$\mathbf{x}_t$ \\ Controller input};
\node[box, below=of ctrl_in] (ctrl) {Feedforward controller \\ $g_\theta$};
\node[box, below=of ctrl] (kv) {Emit key/value \\ $\mathbf{k}_t,\mathbf{v}_t$};
\node[box, right=0.8cm of kv] (mem) {External memory \\ $\mathbf{M}=[\mathbf{v}_1;\dots;\mathbf{v}_T]$};
\node[box, below=of kv] (read) {Content read \\ $\mathbf{w}_t=\mathrm{softmax}(\mathbf{M}\mathbf{k}_t)$ \\ $\mathbf{r}_t=\mathbf{M}^\top\mathbf{w}_t$};

% Horizontal arrow from memory to read
\draw[arrow] (mem.south) -- (read.north);
\draw[arrow] (kv.south) -- (read.north);

% Vertical arrows
\draw[arrow] (ctrl_in) -- (ctrl);
\draw[arrow] (ctrl) -- (kv);
\draw[arrow] (kv) -- (mem);

% Surround sDNC box
\node[draw, rounded corners, fit=(ctrl_in)(ctrl)(kv)(mem)(read), inner sep=9pt, label={[yshift=0.2cm]above:\textbf{Stateless DNC (sDNC)}}] (sdnc_box) {};

% -----------------------------------------------------
% Correspondence arrows (Transformer → sDNC)
% -----------------------------------------------------

\draw[dashedarrow] (qkv.east) -- node[above,sloped]{keys/values $\Leftrightarrow$ $\mathbf{k}_t,\mathbf{v}_t$} (kv.west);

\draw[dashedarrow] (attn.east) -- node[above,sloped]{
\begin{tabular}{c}
attention \\
$\Leftrightarrow$\\
content-based read
\end{tabular}
} (read.north west);

\draw[dashedarrow] (z.east) -- node[above,sloped]{output $\Leftrightarrow$ read vector} ($(read.south west)+(0.0,0.05)$);

\draw[dashedarrow] (x.east) -- node[above,sloped]{
\begin{tabular}{c}
input tokens \\
$\Leftrightarrow$\\
controller input
\end{tabular}
} (ctrl_in.west);

\end{tikzpicture}

\caption{
Diagram showing the structural equivalence between a Transformer layer and a stateless Differentiable Neural Computer (sDNC). A Transformer’s linear projections produce keys and values analogous to the sDNC controller’s emissions. The self-attention mechanism corresponds exactly to content-based reads from an external memory composed of value vectors. Transformer outputs match sDNC readouts for each position.
}
\label{fig:transformer_sDNC_relation}
\end{figure}
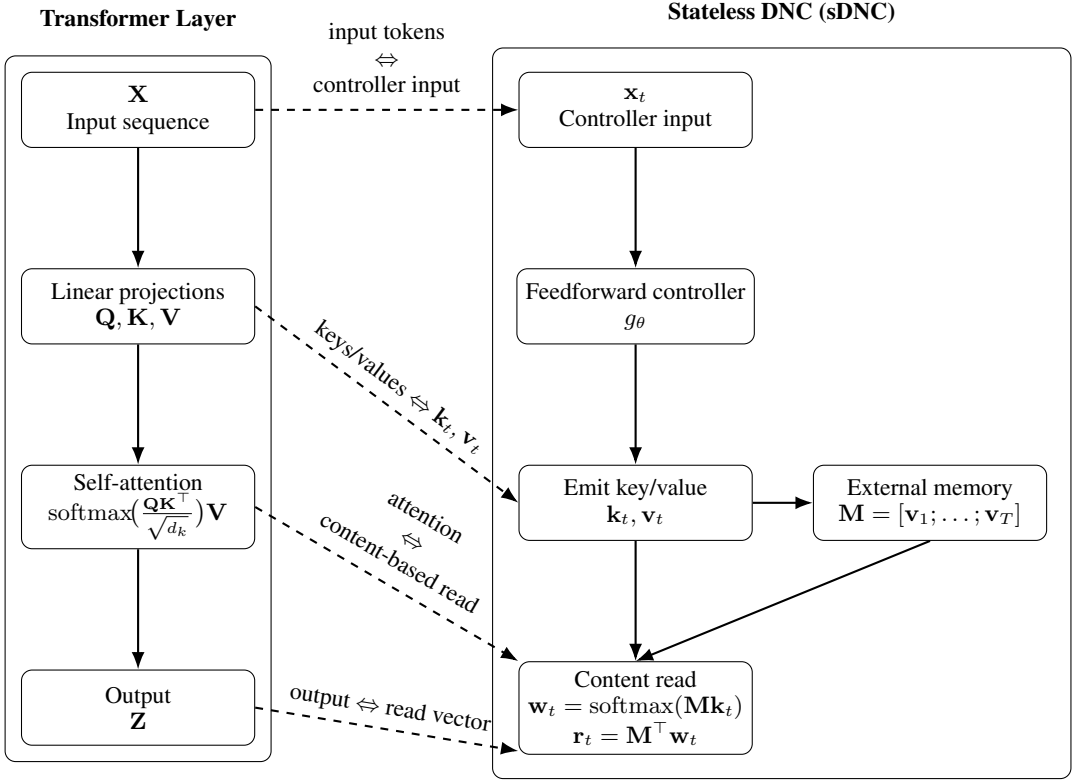

\section{Stateless Differentiable Neural Computers}

The DNC couples a recurrent controller to a differentiable external memory that supports programmable read and write operations. While this architecture provides substantial representational flexibility, it also introduces complexities in training, control flow, and memory management. In particular, a DNC must learn both \emph{how} to update its internal hidden state and \emph{when} to erase, modify, or write to its memory. These complicate optimization and often lead to fragile behavior in long sequences. Transformers, by contrast, forego recurrence and explicit write control entirely, yet display memory-like behavior and support long-context reasoning at scale.

To bridge these perspectives, we introduce a simplified variant of the DNC, which we refer to as the \emph{stateless DNC} (sDNC). The sDNC retains the essential computational structure of a memory-augmented neural machine. It is a controller that produces keys and values, a growing external memory that stores these values, and read heads that retrieve information through content-based lookup, but it removes the recurrent controller state and all forms of learned memory modification. This restricted formulation reveals that the core operations of the Transformer emerge directly from those of a DNC, once the architecture is constrained to a feedforward controller and a write-once memory. The sDNC thus serves as the minimal memory-augmented model capable of reproducing the behavior of a Transformer layer. Formally, we define the architecture, shown in Figure \ref{fig:transformer_sDNC_relation}, as follows.

\textbf{Stateless DNC (sDNC):} A stateless Differentiable Neural Computer consists of the following components. First, the system includes a feedforward controller $g_\theta$ that maps each input vector $\mathbf{x}_t$ to a key--value pair $(\mathbf{k}_t, \mathbf{v}_t)$. Unlike the recurrent controllers used in traditional memory-augmented networks, $g_\theta$ does not maintain an internal hidden state across timesteps. Each transformation is therefore memoryless with respect to the controller, and all temporal dependencies are mediated exclusively through interactions with the external memory:
\[
(\mathbf{k}_t,\mathbf{v}_t) = g_\theta(\mathbf{x}_t).
\]

Second, the architecture contains a write-once external memory that accumulates the value vectors generated by the controller. After processing $t$ inputs, the memory takes the form
\[
\mathbf{M}_t = 
  \begin{bmatrix}
    \mathbf{v}_1^\top \\
    \vdots \\
    \mathbf{v}_t^\top
  \end{bmatrix}
  \in \mathbb{R}^{t \times W},
\]
where each row corresponds to a stored representation produced at an earlier timestep. No erasure, overwriting, or gating mechanism is provided; the memory grows monotonically as the sequence is processed. This write-once design mirrors the behavior of value matrices in standard Transformers \cite{press2021memorizing, bahdanau2015neural}. This simplifies the architecture considerably while aligning it with the Transformer, whose internal representations (e.g., the value vectors at each layer) are likewise computed once and then left unchanged for subsequent attention-based queries.

Third, the model is equipped with $H$ content-based read heads. Each head receives a key vector $\mathbf{k}_t^{(h)}$ and produces a probability distribution over memory locations by comparing the key with every row of the memory matrix. These read weights take the form 
\[
\mathbf{w}_t^{(h)} = 
\mathrm{softmax}\!\left(
  \frac{1}{\sqrt{W}} \mathbf{M}_t \mathbf{k}_t^{(h)}
\right),
\]
corresponding to a soft address over all previously stored values. The weighted sum of memory rows yields the readout vector 
\[
\mathbf{r}_t^{(h)} = \mathbf{M}_t^\top \mathbf{w}_t^{(h)}.
\]
This mechanism directly parallels multi-head attention in Transformers, where each head computes dot-product similarities between queries and keys, normalizes the scores using a softmax, and aggregates values accordingly.

Importantly, the sDNC omits the traditional DNC’s recurrent hidden state, erase-and-add write operations, temporal linkage structure, and location-based addressing. All behavioral complexity arises solely from the feedforward generation of key--value pairs and the differentiable content-based lookup in the external memory. As we demonstrate in subsequent sections, once a DNC is constrained in this way, the remaining model is not merely reminiscent of a Transformer layer: it is mathematically equivalent to one.

This formalization highlights a crucial conceptual point. The Transformer’s
attention mechanism does not merely resemble a DNC read head, but it \emph{is} one,
implemented with fixed equations and without recurrent controller dynamics or
learned write policies. Similarly, the key and value projections in a Transformer
serve exactly the role played by the controller's emitted key--value vectors in a
memory-augmented network. The widely-used interpretation of Transformers as
attentional or relational architectures therefore admits a more computationally
grounded description: they are memory-based architectures in which all memory
interactions are restricted to content-based reads of a write-once memory.
This viewpoint not only provides a principled understanding of the capabilities
and limitations of Transformers but also establishes a foundation for unifying
them formally with the broader class of differentiable memory systems.

\section{Main Result: Transformers are sDNCs}

We now state the equivalence theorem. Similar equivalence observations have been noted in restricted settings in prior work \cite{katharopoulos2020transformers}. 

\begin{theorem} 
Consider a single causal Transformer self-attention layer with parameters $(W_Q,W_K,W_V,W_O)$ acting on a sequence $\mathbf{X} = (\mathbf{x}_1,\dots,\mathbf{x}_T)$. Let $\mathbf{z}_t$ denote its attention output at position $t$. Then there exists an sDNC such that its read vector $\mathbf{r}_t$ at time $t$ equals $\mathbf{z}_t$ for all $t$. Moreover, multi-head attention corresponds to $H$ independent read heads, and the Transformer output projection $W_O$ corresponds to mixing the read vectors.
\end{theorem}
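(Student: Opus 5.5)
The plan is a direct construction: choose the controller $g_\theta$ and the memory layout so that the sDNC read equations reproduce, term by term, the causal attention formula
\[
\mathbf{z}_t=\sum_{i\le t}\alpha_{t,i}\,W_V\mathbf{x}_i,\qquad \alpha_{t,i}=\frac{\exp(\langle W_Q\mathbf{x}_t,W_K\mathbf{x}_i\rangle/\sqrt{d_k})}{\sum_{j\le t}\exp(\langle W_Q\mathbf{x}_t,W_K\mathbf{x}_j\rangle/\sqrt{d_k})}.
\]
Causality comes for free. After $t$ steps the memory $\mathbf{M}_t$ has exactly the rows written at times $1,\dots,t$, so the softmax over the rows of $\mathbf{M}_t$ is exactly the causally masked softmax over $j\le t$. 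I will first treat a single head and then stack heads.

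\textbf{Main obstacle.} The sDNC as defined addresses memory by comparing the read key with the \emph{same} rows it later averages: $\mathbf{w}_t=\mathrm{softmax}(\mathbf{M}_t\mathbf{k}_t/\sqrt{W})$ and $\mathbf{r}_t=\mathbf{M}_t^\top\mathbf{w}_t$. Attention instead scores against keys $W_K\mathbf{x}_i$ but averages values $W_V\mathbf{x}_i$. If I naively store $\mathbf{v}_i=W_V\mathbf{x}_i$ and read with $W_Q\mathbf{x}_t$, I get the wrong weights.

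I will resolve this by making each written row a concatenation $\mathbf{v}_i:=[\,W_K\mathbf{x}_i;\;W_V\mathbf{x}_i\,]\in\mathbb{R}^{W}$ with $W=d_k+d_v$. The read key is zero-padded and rescaled:
\[
\mathbf{k}_t:=\bigl[\sqrt{W/d_k}\,W_Q\mathbf{x}_t;\;\mathbf{0}_{d_v}\bigr].
\]
Then $\mathbf{M}_t\mathbf{k}_t/\sqrt{W}$ has $i$-th entry $\langle W_Q\mathbf{x}_t,W_K\mathbf{x}_i\rangle/\sqrt{d_k}$. This gives $\mathbf{w}_t=(\alpha_{t,1},\dots,\alpha_{t,t})$ exactly, and the readout is
\[
\mathbf{r}_t=\Bigl[\textstyle\sum_i\alpha_{t,i}W_K\mathbf{x}_i;\;\mathbf{z}_t\Bigr].
\]
The controller $g_\theta(\mathbf{x}_t)=(\mathbf{k}_t,\mathbf{v}_t)$ is a fixed linear map of $\mathbf{x}_t$ with no hidden state, as the definition requires.

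One point remains: the theorem asks for $\mathbf{r}_t=\mathbf{z}_t$ exactly, not $\mathbf{z}_t$ as a sub-block. I will handle this by letting the output read be the fixed coordinate projection $P=[\,0\;\;I_{d_v}\,]$ applied to $\mathbf{M}_t^\top\mathbf{w}_t$, which gives $P\mathbf{r}_t=\mathbf{z}_t$. Alternatively, I will absorb $P$ into the mixing map in the multi-head statement, where $W_OP$ is still linear. I will state explicitly that this projection is the only liberty taken with the definition. Without it the claim is false as literally stated, because addressing and content use the same memory rows.

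For $H$ heads, each row will concatenate all per-head blocks, $\mathbf{v}_i=[W_K^{(1)}\mathbf{x}_i;W_V^{(1)}\mathbf{x}_i;\dots;W_K^{(H)}\mathbf{x}_i;W_V^{(H)}\mathbf{x}_i]$. Head $h$ receives a key supported only on its own key block, with the scale adjusted to the new $W$, so its weights equal that head's attention weights. Its projected read then equals $\mathbf{z}_t^{(h)}$, and the heads are independent because their key supports are disjoint. Finally, the Transformer output $W_O[\mathbf{z}_t^{(1)};\dots;\mathbf{z}_t^{(H)}]$ is a linear mixing of the $H$ read vectors, namely $\sum_h W_O^{(h)}P_h\mathbf{r}_t^{(h)}$, where $W_O^{(h)}$ is the column block of $W_O$ for head $h$. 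Once the block layout is fixed, the remaining verifications are routine algebra.
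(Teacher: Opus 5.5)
Your construction is correct. It takes a genuinely different and more careful route than the paper's proof.

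\textbf{What the paper does.} It stores only values, setting $\mathbf{k}_t=W_K^\top\mathbf{x}_t$ and $\mathbf{v}_t=W_V^\top\mathbf{x}_t$ in its transposed weight convention. It then writes the causal attention weights as $\alpha_{t,j}\propto\exp(\mathbf{k}_t^\top\mathbf{v}_j/\sqrt{d_k})$ and matches them to $\mathrm{softmax}(\mathbf{M}_t\mathbf{k}_t/\sqrt{d_k})$. That is a key-against-value score, not the query-against-key score a Transformer computes. $W_Q$ never enters, so the identification is exact only for attention whose scored vectors are the same as its averaged vectors (tied keys and values). The paper runs straight into the obstacle you isolated, that the sDNC addresses with the same rows it averages, and does not address it.

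\textbf{What your route buys.} Your stacked rows $[W_K\mathbf{x}_i;W_V\mathbf{x}_i]$, read with a zero-padded query rescaled by $\sqrt{W/d_k}$, make the read weights equal the true $\alpha_{t,i}$. The price is a wider memory ($d_k+d_v$ per head) and the fixed readout $P$. You therefore prove $P\mathbf{r}_t=\mathbf{z}_t$ rather than $\mathbf{r}_t=\mathbf{z}_t$, or absorb $P$ into the $W_O$ mixing, which keeps the ``moreover'' clause literally true. You flag this honestly. A side benefit is that your single block-structured memory fits the sDNC definition (one memory, $H$ read heads) better than the paper's per-head ``independent key--value transformations.'' Those tacitly need $H$ separate memories when the $W_V^{(h)}$ differ.

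\textbf{One refinement.} Your claim that the literal statement fails without $P$ is right generically, but not universally.
\begin{itemize}
\item At $t=1$ the read vector is $\mathbf{v}_1$. This forces the controller's value map to be $\mathbf{x}\mapsto W_V\mathbf{x}$.
\item After that, the read weights see $\mathbf{x}_j$ only through $W_V\mathbf{x}_j$. For generic weights with $d_v<d$, the attention weights also depend on the component of $\mathbf{x}_j$ in $\ker W_V$, so the literal claim fails.
\item However, if $W_V$ is invertible, a values-only memory read with $\mathbf{k}_t=\sqrt{d_v/d_k}\,W_V^{-\top}W_K^\top W_Q\mathbf{x}_t$ (in your notation) already gives $\mathbf{r}_t=\mathbf{z}_t$ exactly.
\end{itemize}
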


\begin{proof}
For each input $\mathbf{x}_t$, define
\[
  \mathbf{k}_t = W_K^\top \mathbf{x}_t,
  \qquad
  \mathbf{v}_t = W_V^\top \mathbf{x}_t.
\]
Define memory $\mathbf{M}_t$ as the vertical concatenation of past value vectors. For causal attention, the attention distribution at time $t$ is
\[
  \alpha_{t,j} = \frac{\exp\!\left(
    \frac{\mathbf{k}_t^\top \mathbf{v}_j}{\sqrt{d_k}}
  \right)}{
    \sum_{\ell=1}^t \exp\!\left(
      \frac{\mathbf{k}_t^\top \mathbf{v}_\ell}{\sqrt{d_k}}
   \right)}.
\]
This equals
\[
  \mathbf{w}_t = 
    \mathrm{softmax}\!\left(\frac{1}{\sqrt{d_k}}\mathbf{M}_t\mathbf{k}_t\right),
\]
the content-based read weights of the sDNC.

The sDNC read vector is thus
\[
  \mathbf{r}_t
  = \mathbf{M}_t^\top \mathbf{w}_t
  = \sum_{j=1}^t \alpha_{t,j}\mathbf{v}_j,
\]
which matches the Transformer attention output $\mathbf{z}_t$ prior to output projection. Applying $W_O$ yields the complete attention output.

The extension to multi-head attention is immediate: each head corresponds to an independent key--value transformation and independent memory read, and $W_O$ linearly mixes the concatenated read vectors.
\end{proof}

\section{Cross-Attention as Multi-Memory sDNC}

Encoder-decoder architectures introduce \emph{cross-attention}, in which decoder queries attend to encoder keys/values. Cross-attention parallels classical sequence-to-sequence attention mechanisms \cite{bahdanau2015neural}. We show that this corresponds to an sDNC with two external memories: \textit{decoder memory} which is causal and write-once, produced from decoder states; and \textit{encoder memory} which is fixed and populated by encoder values.

\textbf{Two-Memory sDNC:} An sDNC with two memories has
\[
  \mathbf{M}^{(\text{dec})}_t = 
    \begin{bmatrix}
      \mathbf{v}^{(\text{dec})}_1 \\
      \vdots\\
      \mathbf{v}^{(\text{dec})}_t
    \end{bmatrix},
  \qquad
  \mathbf{M}^{(\text{enc})} =
    \begin{bmatrix}
      \mathbf{v}^{(\text{enc})}_1 \\
      \vdots\\
      \mathbf{v}^{(\text{enc})}_S
    \end{bmatrix},
\]
and provides two families of read heads: causal self-attention reads from $\mathbf{M}^{(\text{dec})}_t$, while cross-attention reads from $\mathbf{M}^{(\text{enc})}$.

Next we show that cross-attention equals sDNC multi-memory read. Let $\mathbf{M}^{(\text{enc})}$ be the matrix of encoder values. For each decoder timestep $t$, cross-attention computes
\[
  \mathbf{z}_t^{(\text{cross})}
  = \mathrm{softmax}\!\left(
      \frac{\mathbf{q}_t^{(\text{dec})} \left(\mathbf{M}^{(\text{enc})}W_K^{(\text{enc})}\right)^\top}{\sqrt{d_k}}
    \right)
    \mathbf{M}^{(\text{enc})}W_V^{(\text{enc})}.
\]
This is exactly the read operation of an sDNC with memory $\mathbf{M}^{(\text{enc})}$ and controller key $\mathbf{k}_t^{(\text{dec})}$. Cross-attention uses the encoder memory as the value matrix and its key matrix as the addressing basis. The softmax over dot products is identical to the sDNC content-based read; there is no dependency on decoder memory in this operation, and no recurrence. Therefore cross-attention is precisely an sDNC read from an independent write-once memory.

\section{Discussion}

This work formally establishes that Transformer layers implement stateless DNCs. The mapping is exact: the external memory in a Transformer layer corresponds to the matrix of value vectors; the controller is the feedforward computation that produces keys and values; and self-attention and cross-attention are nothing more than content-based read operations into one or more write-once memories. Residual connections and feed-forward (MLP) blocks serve as local post-processing mechanisms that transform each position’s representation independently, without introducing recurrent state.

Beyond formal equivalence, this perspective reveals deeper conceptual parallels that shed light on why Transformers scale and generalize so effectively. Traditional memory-augmented neural networks such as Neural Turing Machines and DNCs were designed explicitly to address tasks requiring algorithmic reasoning, variable binding, and persistent memory management. Transformers, despite lacking an explicit instruction set or recurrent control logic, display many of these behaviors in practice: they retrieve distant information through attention, store intermediate computations in value vectors, and coordinate information flow across layers through a combination of content-based access and hierarchical transformation. This connection helps explain why Transformers can exhibit algorithmic behavior \cite{wei2022chain}. Interpreting a Transformer as a stateless DNC clarifies these phenomena: the model implements a fixed memory interaction protocol that is simple enough to train at scale but expressive enough to support compositional and algorithmic behavior.

The perspective also highlights what Transformers \emph{lack} relative to full DNCs. A classical DNC can choose when and where to write, update, erase, or reuse memory locations; it operates with explicit temporal link matrices and can learn to traverse memory in structured ways independent of content. Transformers, by contrast, operate with strictly write-once memories tied to input positions and layers. They lack the ability to learn dynamic memory updates or long-term persistence across segments. As a result, modern Transformer systems require auxiliary mechanisms \cite{bahdanau2015neural} such as retrieval augmentation, key–value caching, or windowed attention to extend their effective memory horizon. From the sDNC perspective, this is equivalent to manually managing external memory that a DNC would naturally learn to manipulate.

These observations suggest that combining DNC-style capabilities with Transformer architectures may yield models with improved reasoning, efficiency, and robustness. Several promising directions emerge from this synthesis. One possibility is to incorporate selective write mechanisms, enabling a Transformer layer to update its value memory dynamically rather than write strictly once per token. This could mitigate the quadratic memory growth associated with long contexts and allow the model to preserve important information while discarding irrelevant content. Several recent works propose selective write or routing operations reminiscent of DNC functionality \cite{csordas2021neural}. Another avenue is to introduce differentiable erase or modify operations, enabling Transformers to maintain long-term state that persists across segments or even across tasks. Such extensions would bring the architecture closer to a full DNC, potentially yielding models with more stable memory utilization and stronger algorithmic capabilities.

Likewise, temporal-link matrices or differentiable pointers, central components of DNCs, could enrich Transformer attention with structured traversal abilities. Whereas self-attention selects memory positions based solely on content similarity, a DNC’s temporal memory introduces a notion of sequence order beyond positional encodings, allowing more explicit representation of chains of reasoning. Integrating such mechanisms into Transformer layers may help models reconstruct multi-step relationships more reliably and interpretably.

Finally, viewing Transformers as sDNCs provides a principled lens for analyzing recent architectural innovations, such as linear attention, state-space models, and recurrent memory Transformers. Many of these architectures can be reinterpreted as introducing learned write policies, constrained read mechanisms, or alternative memory organizations, all of which align naturally with the broader DNC framework. This unifying view invites a systematic exploration of hybrid models that blend the simplicity, scalability, and parallelizability of Transformers with the structural flexibility of DNC-inspired memory systems. Such models may ultimately offer the best of both worlds: the performance and stability of large-scale Transformers and the expressive, algorithm-like behavior of differentiable memory machines.

\section{Conclusion}

We have shown that Transformers can be interpreted as stateless Differentiable Neural Computers, where the value matrix serves as a write-once memory and attention implements content-based reads. This formulation provides a unified view of attention-based architectures within the broader framework of differentiable memory systems. By clarifying how information is stored, retrieved, and transformed across layers, the sDNC perspective offers a principled foundation for analyzing Transformer behavior and motivates new hybrid architectures that integrate explicit memory operations with the scalability and simplicity of Transformer models.

\section*{Acknowledgment}
This material is based upon work supported in part by NSF under Award IIS-2325863. Any opinions, findings, and conclusions or recommendations expressed in this publication are those of the author(s) and do not necessarily reflect the views of the NSF.

\bibliographystyle{plain}
\bibliography{reference}

\end{document}